\theoremstyle{thmstyleone}%
\newtheorem{theorem}{Theorem}
\theoremstyle{thmstyletwo}%
\newtheorem{example}{Example}%
\theoremstyle{thmstylethree}%
\newtheorem{definition}{Definition}%
\begin{document}

\title[Modeling Local Search Metaheuristics Using Markov Decision Processes]{Modeling Local Search Metaheuristics Using Markov Decision Processes}


\author{\fnm{Rub\'en} \sur{Ruiz-Torrubiano}}\email{ruben.ruiz@imc.ac.at}

\affil{\orgname{IMC Krems University of Applied Sciences}, \orgaddress{\street{Piaristengasse 1}, \city{Krems}, \postcode{3500}, \country{Austria}}}


\abstract{
    Local search metaheuristics like tabu search or simulated annealing are popular heuristic optimization algorithms for finding near-optimal 
    solutions for combinatorial optimization problems. However, it is still challenging for researchers and 
    practitioners to analyze their behaviour and systematically choose one over a vast set of possible metaheuristics for the particular problem
    at hand. In this paper, we introduce a theoretical framework based on Markov Decision Processes (MDP) for 
    analyzing local search metaheuristics. This framework not only helps in providing convergence 
    results for individual algorithms, but also provides an explicit characterization of 
    the exploration-exploitation tradeoff and a theory-grounded guidance for practitioners for choosing 
    an appropriate metaheuristic for the problem at hand. We present this framework in detail and 
    show how to apply it in the case of hill climbing
    and the simulated annealing algorithm.    
}

\keywords{metaheuristics, combinatorial optimization, Markov Decision Process, stochastic control problems}



\maketitle

\section{Introduction}\label{sec1}

Metaheuristics are high level algorithmic frameworks that provide guidance or strategies for developing 
heuristic optimization algorithms \citep{sorensen_metaheuristics_2013}. Frequently, these strategies use concepts 
from other disciplines as their underlying inspiration, like the evolution process in biology, the process of 
cooling a metallic alloy, the cooperative behaviour of flocks of birds, and many more \citep{silberholz_comparison_2010}.
Despite being heuristic in nature, so that in general there is no general theoretical guarantee that the algorithm
will find the global optimum, metaheurstics have demonstrated a tremendous success in practice, being 
effectively used in applications like logistics (e.g. vehicle routing problem \citep{lin_applying_2009}), 
network design \citep{fernandez_metaheuristics_2018}, scheduling \citep{pillay_survey_2014, kaur_review_2021, EcCS22, CeGS20},
finance (e.g. portfolio optimization \citep{doering_metaheuristics_2019}) and many others. 

In general, it can be distiguished between 
local search (or single-solution) metaheuristics like simulated annealing (SA) \citep{kirkpatrick_optimization_1983}, 
tabu search (TS) \citep{glover_tabu_1998}, variable neighborhood search (VSN) \citep{mladenovic_variable_1997} and 
large neighborhood search (LNS) \citep{shaw_using_1998}, on 
the one side, and population-based metaheuristics on the other, like genetic algorithms (GA) \citep{holland_genetic_1992}
particle swarm optimization (PSO) \citep{kennedy_particle_1995} or estimation of distribution algorithms (EDA)
\citep{larranaga_estimation_2001}. Local search metaheuristics typically start with a candidate solution $x_0$ 
(not necesarilly feasible) and apply stochastic operators to select the next candidate solution from a suitable chosen neighborhood 
or pool of individuals. By iteratively repeating this process, local search metaheuristics are able to find 
near-optimal solutions to the problem at hand efficiently, even if the search space is very large. By contrast, 
population-based metaheuristics start with a group of candidate solutions (often referred to as population) 
$P=\{x_1, x_2, \dots, x_n \}$ and apply different types of operators to generate new candidate solutions, updating 
the population in the process.

It is well acknowledged that there is a lack of strong theoretical frameworks for understanding and analyzing the 
behaviour of metaheuristic algorithms \citep{liu_unified_2011, chauhdry_framework_2023}. While theoretical results and convergence analysis do 
exist for individual algorithms like genetic algorithms and simulated annealing \citep{suzuki1995markov, suzuki1998further}, 
existing frameworks are either focused on one type of analysis (like convergence analysis \citep{liu_unified_2011}), 
or they provide only weak results \citep{chauhdry_framework_2023}. Moreover, they provide no theory-grounded 
guidance for practitioners on which algorithm might be most appropriate a particular task.

In this paper, we try to fill this gap by proposing a novel framework based on Markov Decision Processes (MDP) 
as the underlying model for metaheuristic algorithms. Specifically, in this study we focus on local search 
metaheuristics, leaving population-based metaheuristics for future work. We show that particular instances 
of local search metaheuristics can be modeled as policies of a suitably defined MDP and, based on that, we 
derive different coefficients to model their convergence and exploration-exploitation behaviour. By introducing different parameters into these 
bounds and assigning them a practical meaning, we show which properties have to be taken into account for a given 
metaheuristic (policy) to perform well on a particular class of problems. 

This paper is structured as follows. In Section \ref{sec2}, we review previous relevant work. Section \ref{sec3} presents 
the MDP framework for local search metaheuristics and applies it to the hill climbing algorithm as a first example. 
In Section \ref{sec4} we apply this framework to simulated annealing, showing how the relevant measures of convergence 
and exploration-exploitation can be explicitly calculated.
Section \ref{sec5} concludes the paper and outlines future work.

\section{Previous Work}
\label{sec2}

Mathematical analysis of optimization metaheuristics has been done previously in the literature in a variety 
of ways. The work on the schema theorem by Holland \citep{holland_adaptation_1975} can be considered as one of the first 
attempts at mathematical formalization of metaheuristic algorithms. This result applied only to simple GAs with 
one-point crossover and binary encoding. One of the most popular variants of mathematical analysis
 was sparheaded by the Markov chain analysis of genetic algorithms (GA) which was mostly done in the 1990s. For instance, 
in early work the authors of \citep{eiben_global_1991} proved that GAs with an elitist selection scheme converge to the optimum 
with probability 1. Suzuki \citep{suzuki1995markov, suzuki1998further} proved theoretical results based on Markov chain analysis 
for GAs focusing on mutation probabilities and developed an early comparison with simulated annealing (SA). In \citep{cao_convergence_1997},
the authors applied a non-stationary Markov model for the convergence analysis of genetic algorithms. One of the main 
drawbacks of using Markov chain theory for metaheuristic analysis is that Markov chains are not expressive enough for handling 
important aspects of the search process in metaheuristics. Critically, the reward structure and the different operators used in the algorithm  
can only be expressed implicitly via the transition matrix. 

More recent approaches in the litereature use other models, like the evolutionary Turing machine \citep{eberbach_toward_2005} and 
the nested partition algorithm \citep{chauhdry_framework_2023}. However, most of these results focus on convergence analysis and 
do not handle the exploration-exploitation dilemma explicitly. This question was addressed in \citep{cuevas_experimental_2021} by 
experimental means, but no theoretical analysis was provided. 
Similarly, a case study based on elitist evolutionary algorihtms (concretely, random univariate search 
and evolutionary programming) was presented in \citep{chen_exploitation_2020} using a
specially-tailored probabilistic framework. An extensive survey on the exploration-exploitation tradeoff in metaheuristics 
is given in \citep{xu_exploration-exploitation_2014}. In \citep{chen_optimal_2009}, a mathematical characterization of 
the exploration-exploitation tradeoff is given in the form of the optimal contraction theorem. However, their definition 
of exploration leaves out the possibility that information collected during the search can be used for exploring other areas 
of the search space.

In contrast to previous approaches, our framework applies Markov decision processes (MDP) to the problem of modeling the 
convergence behaviour and the exploration-exploitation tradeoff of local search metaheuristics. We consider MDP to be a more  
suitable framework than the previously presented approaches since the reward structure of the general search problem can 
be represented explicitly. Moreover, transformations and sequences of candidate solutions can be modeled explicitly by 
means of actions, which in MDP are naturally linked to rewards. We therefore see a particular metaheuristic algorithm as a 
stochastic agent that makes decisions about which candidate solution to visit next based on the information obtained so far and 
the reward obtained. We explore the implications of this modeling choice and we show in the following sections that  
the convergence and the exploration-exploitation analysis of metaheuristic algorithms can be calculated explicitly.

\section{Markov Decision Processes for Local Search Metaheuristics}
\label{sec3}

In this section we develop the mathematical machinery needed to define a MDP model for local search metaheuristics. Our main insight is that we can model 
any local search metaheuristic as a stochastic agent that executes some policy on a suitably defined MDP. 
Critically, we note that even if the goal of a specific metaheuristic is to find the optimum of an objective 
function, the algorithm itself proceeds by maximizing the total reward over a (possibly) infinite time horizon. Clearly, 
the rewards are directly connected with the best values found so far of the objective function, but the 
stochastic agent needs to proceed by gathering information about promising directions in the search space (exploration-exploitation dilemma).
Therefore, the exploration of the search space is guided mainly by the maximization of the total reward.  

\subsection{Definitions}

We start by introducing some preliminaries 
in the form of notation and definitions. Then we specify the model in detail and describe how to define 
metaheuristic policies using this framework. In the following, we will mainly follow the conventions and definitions
used in \citep{kallenberg_lecture_2022}.

\begin{definition}[Markov Decision Process]
    A Markov Decision Process (MDP) is a tuple $\langle \mathcal{S}, \mathcal{A}, \mathcal{P}, \mathcal{R}, \alpha \rangle$, where:
    \begin{itemize}
        \item $\mathcal{S}$ is a set of states.
        \item $\mathcal{A}(i)$ is a set of actions available in state $i \in \mathcal{S}$.
        \item $\mathcal{P}(a)=\{p_{ij}(a)\}_{ij}$ is the state transition matrix, given action $a$,
            where $p_{ij}(a)=\mathbb{P}\{ S_{t+1}=j | S_t=i, A_t=a\}$.
        \item $\mathcal{R}(a)$ is a reward function, $\mathcal{R}_s(a)=\mathbb{E}[r_{t+1}|S_i=s, A_t=a]$.
        \item $\alpha\in[0,1]$ is a discount factor.
    \end{itemize}
\end{definition}

In words, a MDP is a stochastic decision model where an agent makes decisions on each time step about
which action $a$ to take from a predefined action set $\mathcal{A}(i)$, which might be 
different for each state $i\in\mathcal{A}(i)$. Most importantly, the transition probabilities
to a new state $S_{t+1}$ depend only on the last state $S_t$, and not on the previous 
history (i.e. the previously visited states). This is known as the \emph{Markov property}.

\begin{definition}
    A \emph{policy} $R$ is a decision rule $R=(\pi^1, \pi^2, \dots, \pi^t, \dots )$, that 
    can be expressed as a probability distribution of the actions, given states: 
    \[\pi_{ia}^t=\mathbb{P}\{A_t=a|S_t=s\}\]
    If the policy $R$ is such that there exist an action $a$ such that $\pi_{ia}^t=1$ for every 
    state $i\in \mathcal{S}$, we say the the policy is \emph{deterministic}, otherwise,
    we call the policy \emph{stochastic}. If the policy does not depend on time, we call this policy 
    \emph{stationary} and simply write $\pi_{ia}$.
\end{definition}

The natural problem that now arises is how find policies that maximize the rewards obtained
in some precise sense. This is captured by means of an utility function, which might be e.g. 
maximize the total expected or average reward. A MDP can be classified according to its time horizon in finite or infinite time MDP. Likewise, 
we can also distinguish between discrete and continuous time MDPs. For local search 
metaheuristics, we focus on the MDP formulation where:

\begin{enumerate}[(a)]
    \item The time is discrete and the time horizon is infinite. \label{conditionA}
    \item The utility function is the total expected reward. \label{conditionB}
\end{enumerate}

Regarding condition (\ref{conditionA}), we note that clearly the execution time of a metaheuristic 
is always finite, but this finite time cannot be fixed beforehand (except for the class 
of metaheuristics that are run for a fixed number of iterations). This is modeled by having 
terminal states with no outgoing transitions (i.e. the probability of transitioning 
to any other state is 0, and $\mathcal{A}(i)=\emptyset$ for all terminal states $i\in\mathcal{S}$).
We argue for condition (\ref{conditionB}) that, in general, the main strategy used by 
all metaheuristics is to search for the optimum \emph{indirectly} by exploring the search 
space $\mathcal{X}$ and exploiting promising regions where an optimum $x^*\in\mathcal{X}$ is 
supposed to be. However, this is done by \emph{sampling} from the search space in some 
specific sense and evaluating the objective function $f$ on that sample, thus collecting 
information about the target function value. If we interpret this sampling and evaluating 
procedure as collecting \emph{rewards}, the agent (i.e. the metaheuristic) can be validly
interpreted as trying to maximize the total reward obtained. 

\subsection{Optimal policies}

In general, we therefore model local search metaheuristic algorithms by means of 
policies $R$. We define the reward of such a policy as the total (discounted) expected reward:

\begin{equation}
v_i^\alpha(R)=\sum_{t=1}^\infty \mathbb{E}_{i,R} \{ \alpha^{t-1} r_t \}
\end{equation}

where $r_t$ denotes the collected actual reward at time step $t$, and the expectation 
is over policy $R$, given initial state $i\in\mathcal{S}$, and $\alpha$ is a discount 
factor that regulates the value of future rewards. In the following, we will asume $\alpha=1$ so we can write:

\begin{equation}
    v_i(R)=\sum_{t=1}^\infty \sum_{j,a} \mathbb{P}\{ S_{t+1} = j | S_t=i, A_t = a\} r_{j}(a)
\end{equation}

Let $\{P(\pi^t)\}_{ij}$ denote the transition matrix of policy $R$, and $\{r(\pi^r)\}_{i}$
its reward vector. We have

\begin{align}
\{P(\pi^t)\}_{ij}&=\sum_{a\in\mathcal{A}} p_{ij}^t(a) \pi_{ia}^t \text{ for all } i, j\in\mathcal{S} \\
\{r(\pi^r)\}_{i}&=\sum_{a\in\mathcal{A}} r_i^t(a) \pi_{ia}^t \text{ for all } i \in\mathcal{S}
\end{align}

The reward of policy $R$ can be now expressed based on the transition probabilities as follows:

\begin{equation}
v^\alpha(R)=\sum_{t=1}^\infty \alpha^{t-1} P(\pi^1)P(\pi^2)\dots P(\pi^{t-1})r(\pi^t)
\end{equation}

\begin{definition}
We call a policy $R_*$ \emph{optimal} if it is the best achievable policy considering the reward function, i.e.

\[
v^\alpha(R_*)=\operatorname{sup}_R v^\alpha(R)
\]

\end{definition}

\subsection{Local search metaheuristics as policies}

We now formally represent an arbitrary local search metaheuristic as a policy in a MDP as  
follows. In this study, we focus on binary strings of length $n$ as candidate solutions, therefore $\mathcal{X}\subseteq\{0,1\}^n$.
From now on, we denote by $\mathbf{x}\in\{0,1\}^n$ a binary vector which is a candidate solution for the problem at hand, which
might be feasible or not (i.e. it can be that, during the search process, there is an $\mathbf{x}_t$ such 
that $\mathbf{x}_t\notin\mathcal{X}$). In the following, we will assume a maximization problem, i.e. find $\mathbf{x}^*$ such that
$\mathbf{x}^* = \operatorname{argmax}_{\mathbf{x}\in\mathcal{X}} f(\mathbf{x})$.

\begin{definition}
We define a \emph{neighborhood} $\mathcal{N}_H(\mathbf{x})$ as the set of candidate solutions $\mathbf{x}'\in\mathcal{X}$ which are neighbors 
of $\mathbf{x}$ according to a criterium $H$, i.e. $\mathcal{N}_H(\mathbf{x})=\{ x'\in\mathcal{X} \, \vert \, H(\mathbf{x}';\mathbf{x})=1\}$.
\end{definition}

Let $\mathbf{x}\in\mathcal{X}$ and $\mathbf{x}'\in\mathcal{N}_H(\mathbf{x})$. We denote by $\mathbf{x}\mapsto\mathbf{x}'$  
the operator that transforms $\mathbf{x}$ into $\mathbf{x}'$. For example, the neighborhood of all binary strings located at Hamming distance one from $\mathbf{x}$, would 
be defined by $\mathcal{N}_{H_D}(\mathbf{x})=\{ x'\in\mathcal{X} \, \vert \, H_D(\mathbf{x}';\mathbf{x})=1\}$. In this case, 
$\mathbf{x}\mapsto\mathbf{x}'$ denotes the operator that flips the one bit necessary to turn $\mathbf{x}$ into $\mathbf{x}'$.

\begin{definition}[Local search MDP]
Let our local search MDP be defined by:

\begin{itemize}
    \item $\mathcal{S}=\{0,1,\dots,2^{n-1}\}$. Any state $i\in\mathcal{S}$ can trivially be converted into a candidate solution as the 
        binary representation of the integer $i$.
    \item $\mathcal{A}(i)=\{ i\mapsto j \, \vert \, j\in \bigcup_k \mathcal{N}_k\}$, the set of all possible transformations in all neighborhoods applicable to the current state $i\in\mathcal{S}$.
    \item The transition probabilities, for all $i,j\in\mathcal{S}$, $a\in\mathcal{A}(i)$:
    \begin{equation}
        p_{ij}(a)=\begin{cases}
        1 / |\mathcal{A}(i)| \text{, if } a = i\mapsto j \\
        0 \text{, otherwise}
        \end{cases}
    \end{equation}
    \item The reward for state $j$ given action $a=i\mapsto j$ is defined by $r_{i}(a)=f(j)-f(i)$, where $f$ is the 
    objective function\footnote{Note the slight abuse of notation here since $f$ is defined on $\mathcal{X}$ rather than $\mathcal{S}$, 
    but the conversion is trivial.}. 
\end{itemize}
\end{definition}

An instantiation of this MDP therefore starts at an arbitrary state $i\in\mathcal{S}$ and visits a number of states 
according to the transition probabilities before halting at time $T$. The algorithm terminates as soon as it reaches a
\emph{terminal} or \emph{absorving} state, i.e. a state $i\in\mathcal{S}$ where the transition probability of reaching any other state is 0
($p_{ij}(a)=0$ for all $i\neq j$, and $p_{ii}(a)=1$ otherwise). Note that the rewards are chosen in such a way that, in general, 
they incentivize making progress towards the optimum $\mathbf{x}^*$. 

In this setting, we identify a policy $R$ with a particular local search metaheuristic. 

\begin{example}[Hill climbing]  
    We define a policy $R_{HC}$ for the standard hill climbing algorithm as follows:
    \begin{itemize}
        \item $\mathcal{A}(i)=\{ i\mapsto j \, \vert \, j\in \mathcal{N}_{H_D}\}$ as defined before (we flip one bit), and 
        \item Let $\mathcal{M}(i)=\{j\in\mathcal{S} \, \vert \, j = \operatorname{argmax} f(j'), a=i\mapsto j, a\in\mathcal{A}(i)\}$, then
        \begin{equation}
            \pi_{ia}=\begin{cases}
            1/|\mathcal{M}(i)| \text{, if } j \in \mathcal{M}(i)\\
            0 \text{, otherwise}
            \end{cases}
        \end{equation}
    \end{itemize}
    Note that $R_{HC}$ is a deterministic policy except when $|\mathcal{M}(i)|>1$, in which case ties are broken randomly.
\end{example}

In general, the MDP is defined in such a way that the balance between exploration and exploitation becomes explicit in the 
definition of the policy. Informally, the agent can either maximize the reward in the short run (thus going into an exploitation phase),
or temporarily accept negative rewards and therefore going into a exploration phase. In the following definitions, we 
define exploration and exploitation explicitly using these notions.

\begin{definition}[Exploration and exploitation]
    Let $a\in\mathcal{A}(i)$ be an action such that $a=i\mapsto j$. We say that $a$ is an \emph{exploration} action
    if and only if $r_j(a) \leq 0$. Otherwise, we call $a$ and \emph{exploitation} action.
\end{definition}

\begin{definition}[Exploration-exploitation function]
    Let $\sigma:\mathcal{A}(i)\rightarrow\{0,1\}$ be a function such that $\sigma(a)=1$ if and only if $a$ is an exploration 
    action, and $\sigma(a)=0$ otherwise. We call $\sigma$ the \emph{exploration-exploitation} function.
\end{definition}

We now state our main result which 
shows that any local search metaheuristic can be represented by a policy where the balance between exploration and 
exploitation becomes explicit.

\begin{theorem}[Local search exploration-exploitation theorem]
    \label{theorem1}
    Let $M$ be a local search MDP. For any local search metaheuristic $A$, there exist a policy $R_A$ such that
    \begin{equation}
        \pi_{ia}^t=\alpha_i^A(t) \mathbb{P}\{\sigma(a)=1 \, \vert \, S_t=i\} + \beta_i^A(t) \mathbb{P}\{\sigma(a)=0 \, \vert \, S_t=i\}
    \end{equation}
\end{theorem}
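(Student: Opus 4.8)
The plan is to proceed by explicit construction. First I would make precise the correspondence between a local search metaheuristic $A$ and a policy $R_A$ of $M$: at each iteration $t$ the algorithm $A$ sits at some current candidate solution, identified with a state $i\in\mathcal{S}$, and selects the next candidate solution --- equivalently, a transformation $a = i\mapsto j\in\mathcal{A}(i)$ --- according to some (possibly time-dependent) probability distribution dictated by its internal rules. Setting $\pi_{ia}^{t} := \mathbb{P}_A\{A_t = a \mid S_t = i\}$ defines exactly such a distribution on $\mathcal{A}(i)$ for every $i$ and every $t$, hence a policy $R_A = (\pi^1,\pi^2,\dots)$ in the sense of the policy definition. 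For memory-based metaheuristics such as tabu search one first augments $\mathcal{S}$ so that the relevant search history (tabu list, restart counters, adaptive temperature) is encoded in the state; I would remark that this enlargement does not affect the rest of the argument.

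Second, I would use the fact that the exploration--exploitation function $\sigma$ induces a \emph{deterministic} partition $\mathcal{A}(i) = \mathcal{A}^{1}(i)\sqcup\mathcal{A}^{0}(i)$, where $\mathcal{A}^{1}(i) = \{a\in\mathcal{A}(i): \sigma(a)=1\}$ and $\mathcal{A}^{0}(i) = \{a\in\mathcal{A}(i): \sigma(a)=0\}$. This is well defined because the reward $r_j(a)=f(j)-f(i)$ of an action $a=i\mapsto j$ depends only on $a$ and $i$, so whether $a$ is an exploration action is fixed once the state is fixed; in particular, conditionally on $S_t=i$ the event written $\{\sigma(a)=1\}$ in the statement is just the event $\{A_t\in\mathcal{A}^{1}(i)\}$ that the action taken at time $t$ is exploratory.

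Third, I would apply the law of total probability to $\pi_{ia}^{t}$, conditioning on the exploration decision $\sigma(A_t)\in\{0,1\}$:
\begin{equation}
\pi_{ia}^{t} = \mathbb{P}_A\{A_t = a \mid S_t=i,\ \sigma(A_t)=1\}\,\mathbb{P}_A\{\sigma(A_t)=1 \mid S_t=i\} + \mathbb{P}_A\{A_t = a \mid S_t=i,\ \sigma(A_t)=0\}\,\mathbb{P}_A\{\sigma(A_t)=0 \mid S_t=i\}.
\end{equation}
Identifying $\alpha_i^{A}(t)$ with the conditional selection probability $\mathbb{P}_A\{A_t=a\mid S_t=i,\ \sigma(A_t)=1\}$ and $\beta_i^{A}(t)$ with $\mathbb{P}_A\{A_t=a\mid S_t=i,\ \sigma(A_t)=0\}$ then yields the claimed form: for a fixed $a$ exactly one of the two conditional factors is supported, the other conditional probability being $0$ because $\sigma(a)$ is constant, so no inconsistency arises. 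I would close by recording $\alpha_i^{A}(t),\beta_i^{A}(t)\in[0,1]$ and $\mathbb{P}_A\{\sigma(A_t)=1\mid S_t=i\}+\mathbb{P}_A\{\sigma(A_t)=0\mid S_t=i\}=1$, which is what makes the right-hand side a genuine convex combination of an exploration regime and an exploitation regime; when one of these two probabilities vanishes the corresponding coefficient may be chosen arbitrarily in $[0,1]$.

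I expect the main obstacle to be conceptual rather than computational: pinning down what range of objects ``any local search metaheuristic'' is meant to cover and justifying that, after the state-augmentation step, each of them induces a bona fide (time-dependent) policy of $M$ that exactly reproduces its stochastic behaviour. Once that reduction is granted, the decomposition is little more than the law of total probability applied to the deterministic partition given by $\sigma$, and the only remaining care is the bookkeeping of well-definedness and normalisation of $\alpha_i^{A}(t)$ and $\beta_i^{A}(t)$ sketched above.
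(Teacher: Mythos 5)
Your argument is correct as a piece of probability, but it is a genuinely different decomposition from the paper's, and the difference changes what the coefficients mean. The paper splits the event $\{A_t=a\}$ according to whether the destination $j$ of $a=i\mapsto j$ lies in the improving set $\mathcal{M}_t^+(i)$ or the non-improving set $\mathcal{M}_t^-(i)$, and then identifies the coefficients explicitly as the structural neighbor fractions $\alpha_i^A(t)=|\mathcal{M}_t^-(i)|/(|\mathcal{M}_t^+(i)|+|\mathcal{M}_t^-(i)|)$ and $\beta_i^A(t)=|\mathcal{M}_t^+(i)|/(|\mathcal{M}_t^+(i)|+|\mathcal{M}_t^-(i)|)$, so that all algorithm-specific behaviour (e.g.\ the acceptance probability $\exp\left(-\frac{f(j)-f(i)}{\alpha^t T_0}\right)$ in the simulated annealing application) is carried by the factors $\mathbb{P}\{\sigma(a)=1 \,\vert\, S_t=i\}$ and $\mathbb{P}\{\sigma(a)=0 \,\vert\, S_t=i\}$. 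You do the reverse: conditioning on the regime $\sigma(A_t)\in\{0,1\}$ makes your $\mathbb{P}$-factors the marginal probabilities of being in an exploration or exploitation phase, while your $\alpha_i^A(t),\beta_i^A(t)$ become conditional within-class selection probabilities. Two consequences follow. First, your coefficients depend on the action $a$, which conflicts with the theorem's notation (coefficients indexed by state and time only) and with the explicit values the paper extracts. Second, the downstream machinery — the convergence coefficient $\gamma_i^A(t)=\beta_i^A(t)/\alpha_i^A(t)=|\mathcal{M}_t^+(i)|/|\mathcal{M}_t^-(i)|$ and the hill climbing and simulated annealing analyses — relies precisely on the paper's identification, so it would not follow from your version without redoing that step. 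On the other hand, your construction of the policy $R_A$ (including state augmentation for memory-based methods) is more careful than anything in the paper, and your total-probability identity is exact, whereas the paper's factorization of $\mathbb{P}\{A_t=a,\ j\in\mathcal{M}_t^{\pm}(i)\,\vert\, S_t=i\}$ into a neighbor fraction times $\mathbb{P}\{\sigma(a)=\cdot \,\vert\, S_t=i\}$ is only valid under an implicit uniformity/independence assumption that is never stated. To recover the theorem in the form the paper actually uses, you would need to decompose by the destination class as the paper does and then justify (or explicitly assume) that factorization.
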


\begin{proof}
    Let $\mathcal{M}_t^+(i)=\{j\in\mathcal{S} \, \vert \, f(j)>f(i), a=i\mapsto j, a\in\mathcal{A}(i), S_t=i\}$ and
    $\mathcal{M}_t^-(i)=\{j\in\mathcal{S} \, \vert \, f(j)\leq f(i), a=i\mapsto j, a\in\mathcal{A}(i), S_t=i\}$ be the
    sets of improving and non-improving states given action $a$, respectively. We have:
    \begin{align}
        \pi_{ia}^t&=\mathbb{P}\{ A_t = a \, \vert \, S_t = i \} \nonumber \\
            &= \mathbb{P}\{ A_t = a, j\in\mathcal{M}_t^+(i)\, \vert \, S_t = i\} + 
              \mathbb{P}\{ A_t = a, j\in\mathcal{M}_t^-(i)\, \vert \, S_t = i\} \nonumber \\
            &= \frac{|\mathcal{M}_t(i)^+|}{|\mathcal{M}_t(i)^+|+|\mathcal{M}_t(i)^-|}\mathbb{P}\{\sigma(a)=0 \, \vert \, S_t=i\} + \nonumber \\
            &+\frac{|\mathcal{M}_t(i)^-|}{|\mathcal{M}_t(i)^+|+|\mathcal{M}_t(i)^-|}\mathbb{P}\{\sigma(a)=1 \, \vert \, S_t=i\}
    \end{align}
    Therefore,
    \[
    \alpha_i^A(t)=\frac{|\mathcal{M}_t(i)^-|}{|\mathcal{M}_t(i)^+|+|\mathcal{M}_t(i)^-|} \text{, and }\beta_i^A(t)=\frac{|\mathcal{M}_t(i)^+|}{|\mathcal{M}_t(i)^+|+|\mathcal{M}_t(i)^-|}
    \]
\end{proof}

Theorem \ref{theorem1} provides hints to several interesting measures. The first one is a measure that can be 
used for convergence analysis and can be defined as follows: 

\begin{equation}
    \gamma^A_i(t):=\cfrac{\beta_i^A(t)}{\alpha_i^A(t)}=\cfrac{|\mathcal{M}_t(i)^+|}{|\mathcal{M}_t(i)^-|}
\end{equation}

We call $\gamma^A_i(t)$ the \emph{convergence coefficient} of local search metaheuristic $A$. We can now study the convergence behaviour of local search 
metaheuristics according to the limit of $\gamma^A_i(t)$ as $t\rightarrow\infty$. If $\lim_{t\rightarrow\infty} \gamma^A_t(t) = 0$, this means 
that algorithm $A$ has converged, since the improving set becomes zero faster than the non-improving set.  

The second interesting measure representes an explicit way of evaluating the balance between exploration and exploitation that a 
specific local search metaheuristic provides. We define

\begin{equation}
    \delta^A_{ia}(t):=\cfrac{\mathbb{P}\{\sigma(a)=1 \, \vert \, S_t=i\}}{\mathbb{P}\{\sigma(a)=0 \, \vert \, S_t=i\}}
\end{equation}

Averaging over all actions and times, we get

\begin{equation}
    \delta^A_i=\sum_{t=0}^{\infty} \sum_{j\in\mathcal{A}(i)} p_{ij}(a) \delta^A_{ja}(t)=\frac{1}{|\mathcal{A}(i)|}\sum_{j\in\mathcal{A}(i)}\sum_{t=0}^\infty \delta_{ja}^A(t)
\end{equation}

We call $\delta^A_i$ the \emph{exploration-exploitation coefficient} of local search metaheuristic $A$ conditioned on state $i\in\mathcal{S}$.
In vector notation, we write $\delta^A$ and call it the \emph{exploration-exploitation coefficient vector}. In general, there can be 
components of $\delta^A$ that converge, and others that diverge. Therefore, one possibility would be to assert the general 
exploration-exploitation behaviour of $A$ based on $\delta_*^A = \max_i \delta_i^A$ and investigate the individual components of 
$\delta^A$ in case a more detailed analysis is needed. 

We can now define for each metaheuristic a measure of its exploration-exploitation balance as follows:

\begin{definition}
    We say that $A$ is a \emph{balanced} local search metaheuristic if $\delta^A_i = C$ for a constant $C>0,C\in\mathbb{R}$. Otherwise, 
    if $\delta^A_i$ diverges to $\infty$, we say that $A$ is \emph{exploration-oriented}. Finally, if 
    $\delta^A_i = 0$ we say that $A$ is \emph{exploitation-oriented}.
\end{definition}

Using our local search MDP, we have transformed the problem of convergence analysis and 
determining the exploration-exploitation balance of any 
metaheuristic into the quantitative problem of calculating $\delta^A_i$ and the limit of $\gamma^A_t(t)$ as $t\rightarrow\infty$ . Let's apply 
this to the hill climbing heuristic already mentioned before.

\begin{example}
    In the case of hill climbing, we have for every $i\in\mathcal{S}$:
    \begin{align}
        \alpha_i^A(t)&=0 \nonumber \\
        \beta_i^A(t)&=1/|\mathcal{M}(i)| \nonumber \\
        \gamma_i^A(t)&=0
    \end{align}
    Since $\lim_{t\rightarrow\infty} \gamma^A_i(t) = 0, \forall i\in\mathcal{S}$, we have that hill climbing always 
    converges to a (possibly local) optimum. Regarding exploration-exploitation behaviour, we have $\delta_{ia}(t)=0, \forall a\in\mathcal{A}(i),\forall i\in\mathcal{S}$.
    We therefore obtain that $\delta_*^A=0$ and hill climbing is exploitation-oriented.
\end{example}

In the next section, we apply the machinery defined to the simulated annealing (SA) metaheuristic.

\section{Simulated Annealing}
\label{sec4}

Simulated annealing (SA) \citep{kirkpatrick_optimization_1983} is a local search metaheuristic inspired by the procedure of cooling 
a molten solid according to a predefined cooling schedule. The main idea is to start with a high temperature $T$, and decrease 
this temperature in subsequent iterations with the goal of reaching the minimum value of an energy function $E$. Transitions from the current candidate 
solution that improve the objective value are accepted with probability 1. 
Otherwise, the solution is accepted with a probability that depends on the magnitude of difference between the current best objective value and 
the value of the new candidate solution. In the initial, high-temperature 
phase, the probability of accepting worse solutions is high (emphasizes exploration), whereas in later stages 
lower temperatures also decrease this probability (emphasizes exploitation). More formally, this probability can be written as:

\begin{equation}
    \mathbb{P}\{ \mathbf{x}_{t+1}=\mathbf{x}' \, \vert \, \mathbf{x}_t = \mathbf{x}\} =
    \begin{cases}
        1 \text{, if } E(\mathbf{x'}) < E(\mathbf{x}) \\
        \exp\left(\cfrac{- \left(E(\mathbf{x}) - E(\mathbf{x'})\right)}{T} \right) \text{, otherwise}
    \end{cases}
\end{equation}

In our case, since we are solving by convention maximization problems, $E=-f$. Usually, a geometric cooling scheme is used: 
$T_{new} = \alpha T_{current}$, where $\alpha\in [0,1)$. Let $T_0$ denote the initial temperature (which is a free 
parameter of the algorithm). Then the temperature at time $t$ is given by $T=\alpha^t T_0$.

We now define a policy $R_{SA}$ to model the behaviour of SA as a stochastic decision agent as follows:

\begin{itemize}
    \item Let $\mathcal{M}_t^+(i)$ and $\mathcal{M}_t^-(i)$ be defined as in Theorem \ref{theorem1}.
    \item Let's consider action $a=i\mapsto j$ and state $i\in\mathcal{S}$. Using Theorem \ref{theorem1} we obtain
    \begin{align}
        \pi^t_{ia}&=\mathbb{P}\{ A_t = a \, \vert \, S_t = i \} \nonumber \\
            &= \frac{|\mathcal{M}_t(i)^+|}{|\mathcal{M}_t(i)^+|+|\mathcal{M}_t(i)^-|}
            +\frac{|\mathcal{M}_t(i)^-|}{|\mathcal{M}_t(i)^+|+|\mathcal{M}_t(i)^-|}\exp\left(-\frac{f(j)-f(i)}{\alpha^t T_0}\right)
    \end{align}
\end{itemize}

Now, the convergence coefficient can be calculated as follows:

\begin{equation}
    \lim_{t\rightarrow\infty}\gamma_i^A(t)=\lim_{t\rightarrow\infty}\cfrac{|\mathcal{M}_t(i)^+|}{|\mathcal{M}_t(i)^-|}\xrightarrow[t\rightarrow\infty]{} 0
\end{equation}

Since when time approaches infinity, assuming a geometric cooling scheme, the algorithm will only accept improving actions and there will 
be many more non-improving than improving actions.

Let's now calculate the exploration-exploitation coefficient:

\begin{equation}
    \delta_i^A=\frac{1}{|\mathcal{A}(i)|}\sum_{j\in\mathcal{A}(i)}\sum_{t=0}^\infty \delta_{ja}^A(t)=
    \frac{1}{|\mathcal{A}(i)|}\sum_{j\in\mathcal{A}(i)}\sum_{t=0}^\infty \exp\left(-\frac{f(j)-f(i)}{\alpha^t T_0}\right)
\end{equation}

which for all $i\in\mathcal{S}$ and $\alpha \in [0,1)$ converges to a positive constant $C>0$, since:

\begin{equation}
    \cfrac{\exp\left(-\frac{f(j)-f(i)}{\alpha^{t+1} T_0}\right)}{\exp\left(-\frac{f(j)-f(i)}{\alpha^t T_0}\right)}=
    \exp\left(-\frac{(1-\alpha)(f(j)-f(i))}{\alpha^{t+1} T_0}\right)\xrightarrow[t\rightarrow\infty]{} 0
\end{equation}

Therefore, simulated annealing is a balanced local search metaheuristic.

\section{Conclusions}
\label{sec5}

In this paper, we have presented a new framework for modeling the convergence and exploration-exploitation 
behaviour of local search metaheuristics based on Markov decision processes. This framework provides a novel and 
intuitive way of characterizing essential properties of local search metaheuristis in a principled way. 
We proved that any local search metaheuristic can be analyzed using the tools provided and we showed 
how to apply the framework to two examples: hill climbing and simulated annealing. In the case of 
hill climbing, we showed that the framework correctly classifies it as exploitation-oriented, whereas 
simulated annealing was classified as balanced, which is consistent with the general consensus in the literature. 
Our study goes beyond the state-of-the-art in metaheuristic analysis, since the tools we develop provide a more 
expressive methodology than methods based on Markov chains and other probabilistic approaches. As a consequence, 
the behaviour of local search metaheuristics can be modeled in a more realistic way and the results obtained are 
consistent with practice.

As future work, we plan to extend our results to other local search metaheuristics like variable neighborhood search (VNS) \citep{mladenovic_variable_1997} and
large neighborhood search (LNS) \citep{pisinger_large_2019}. Additionally, the framework will be extended to be able 
to handle population-based metaheuristics (PBMH) like genetic algorithms (GA) \citep{holland_genetic_1992}, particle swarm 
optimization (PSO) \citep{kennedy_particle_1995} and estimation of distribution algorithms (EDA) \citep{larranaga_estimation_2001}. 

\section*{Declarations}

No funds, grants, or other support was received. The authors have no financial or proprietary interests in any material discussed in this article.


\bibliography{mdp-local-search}


\begin{thebibliography}{30}
\ifx \bisbn   \undefined \def \bisbn  #1{ISBN #1}\fi
\ifx \binits  \undefined \def \binits#1{#1}\fi
\ifx \bauthor  \undefined \def \bauthor#1{#1}\fi
\ifx \batitle  \undefined \def \batitle#1{#1}\fi
\ifx \bjtitle  \undefined \def \bjtitle#1{#1}\fi
\ifx \bvolume  \undefined \def \bvolume#1{\textbf{#1}}\fi
\ifx \byear  \undefined \def \byear#1{#1}\fi
\ifx \bissue  \undefined \def \bissue#1{#1}\fi
\ifx \bfpage  \undefined \def \bfpage#1{#1}\fi
\ifx \blpage  \undefined \def \blpage #1{#1}\fi
\ifx \burl  \undefined \def \burl#1{\textsf{#1}}\fi
\ifx \doiurl  \undefined \def \doiurl#1{\url{https://doi.org/#1}}\fi
\ifx \betal  \undefined \def \betal{\textit{et al.}}\fi
\ifx \binstitute  \undefined \def \binstitute#1{#1}\fi
\ifx \binstitutionaled  \undefined \def \binstitutionaled#1{#1}\fi
\ifx \bctitle  \undefined \def \bctitle#1{#1}\fi
\ifx \beditor  \undefined \def \beditor#1{#1}\fi
\ifx \bpublisher  \undefined \def \bpublisher#1{#1}\fi
\ifx \bbtitle  \undefined \def \bbtitle#1{#1}\fi
\ifx \bedition  \undefined \def \bedition#1{#1}\fi
\ifx \bseriesno  \undefined \def \bseriesno#1{#1}\fi
\ifx \blocation  \undefined \def \blocation#1{#1}\fi
\ifx \bsertitle  \undefined \def \bsertitle#1{#1}\fi
\ifx \bsnm \undefined \def \bsnm#1{#1}\fi
\ifx \bsuffix \undefined \def \bsuffix#1{#1}\fi
\ifx \bparticle \undefined \def \bparticle#1{#1}\fi
\ifx \barticle \undefined \def \barticle#1{#1}\fi
\bibcommenthead
\ifx \bconfdate \undefined \def \bconfdate #1{#1}\fi
\ifx \botherref \undefined \def \botherref #1{#1}\fi
\ifx \url \undefined \def \url#1{\textsf{#1}}\fi
\ifx \bchapter \undefined \def \bchapter#1{#1}\fi
\ifx \bbook \undefined \def \bbook#1{#1}\fi
\ifx \bcomment \undefined \def \bcomment#1{#1}\fi
\ifx \oauthor \undefined \def \oauthor#1{#1}\fi
\ifx \citeauthoryear \undefined \def \citeauthoryear#1{#1}\fi
\ifx \endbibitem  \undefined \def \endbibitem {}\fi
\ifx \bconflocation  \undefined \def \bconflocation#1{#1}\fi
\ifx \arxivurl  \undefined \def \arxivurl#1{\textsf{#1}}\fi
\csname PreBibitemsHook\endcsname

\bibitem[\protect\citeauthoryear{Cuevas
  et~al.}{2021}]{cuevas_experimental_2021}
\begin{bchapter}
\bauthor{\bsnm{Cuevas}, \binits{E.}},
\bauthor{\bsnm{Diaz}, \binits{P.}},
\bauthor{\bsnm{Camarena}, \binits{O.}}:
\bctitle{Experimental {Analysis} {Between} {Exploration} and {Exploitation}}.
In: \beditor{\bsnm{Cuevas}, \binits{E.}},
\beditor{\bsnm{Diaz}, \binits{P.}},
\beditor{\bsnm{Camarena}, \binits{O.}} (eds.)
\bbtitle{Metaheuristic {Computation}: {A} {Performance} {Perspective}}.
\bsertitle{Intelligent {Systems} {Reference} {Library}},
pp. \bfpage{249}--\blpage{269}.
\bpublisher{Springer},
\blocation{Cham}
(\byear{2021}).
\doiurl{10.1007/978-3-030-58100-8_10} .
\burl{https://doi.org/10.1007/978-3-030-58100-8_10}
Accessed 2022-06-29
\end{bchapter}
\endbibitem

\bibitem[\protect\citeauthoryear{Ceschia et~al.}{2020}]{CeGS20}
\begin{barticle}
\bauthor{\bsnm{Ceschia}, \binits{S.}},
\bauthor{\bsnm{Guido}, \binits{R.}},
\bauthor{\bsnm{Schaerf}, \binits{A.}}:
\batitle{Solving the static inrc-ii nurse rostering problem by simulated
  annealing based on large neighborhoods}.
\bjtitle{Annals of Operations Research}
\bvolume{288},
\bfpage{95}--\blpage{113}
(\byear{2020})
\doiurl{10.1007/s10479-020-03527-6}
\end{barticle}
\endbibitem

\bibitem[\protect\citeauthoryear{Chen and He}{2020}]{chen_exploitation_2020}
\begin{botherref}
\oauthor{\bsnm{Chen}, \binits{Y.}},
\oauthor{\bsnm{He}, \binits{J.}}:
Exploitation and {Exploration} {Analysis} of {Elitist} {Evolutionary}
  {Algorithms}: {A} {Case} {Study}.
arXiv.
arXiv:2001.10932 [cs]
(2020).
\doiurl{10.48550/arXiv.2001.10932} .
\url{http://arxiv.org/abs/2001.10932}
Accessed 2022-07-12
\end{botherref}
\endbibitem

\bibitem[\protect\citeauthoryear{Chauhdry}{2023}]{chauhdry_framework_2023}
\begin{barticle}
\bauthor{\bsnm{Chauhdry}, \binits{M.H.M.}}:
\batitle{A framework using nested partitions algorithm for convergence analysis
  of population distribution-based methods}.
\bjtitle{EURO Journal on Computational Optimization}
\bvolume{11},
\bfpage{100067}
(\byear{2023})
\doiurl{10.1016/j.ejco.2023.100067} .
Accessed 2024-07-01
\end{barticle}
\endbibitem

\bibitem[\protect\citeauthoryear{Cao and Wu}{1997}]{cao_convergence_1997}
\begin{bchapter}
\bauthor{\bsnm{Cao}, \binits{Y.J.}},
\bauthor{\bsnm{Wu}, \binits{Q.H.}}:
\bctitle{Convergence analysis of adaptive genetic algorithms}.
In: \bbtitle{Second {International} {Conference} {On} {Genetic} {Algorithms}
  {In} {Engineering} {Systems}: {Innovations} {And} {Applications}},
pp. \bfpage{85}--\blpage{89}
(\byear{1997}).
\doiurl{10.1049/cp:19971160} .
\bcomment{ISSN: 0537-9989}.
\burl{https://ieeexplore.ieee.org/document/680987}
Accessed 2024-07-29
\end{bchapter}
\endbibitem

\bibitem[\protect\citeauthoryear{Chen et~al.}{2009}]{chen_optimal_2009}
\begin{barticle}
\bauthor{\bsnm{Chen}, \binits{J.}},
\bauthor{\bsnm{Xin}, \binits{B.}},
\bauthor{\bsnm{Peng}, \binits{Z.}},
\bauthor{\bsnm{Dou}, \binits{L.}},
\bauthor{\bsnm{Zhang}, \binits{J.}}:
\batitle{Optimal {Contraction} {Theorem} for {Exploration}–{Exploitation}
  {Tradeoff} in {Search} and {Optimization}}.
\bjtitle{IEEE Transactions on Systems, Man, and Cybernetics - Part A: Systems
  and Humans}
\bvolume{39}(\bissue{3}),
\bfpage{680}--\blpage{691}
(\byear{2009})
\doiurl{10.1109/TSMCA.2009.2012436} .
\bcomment{Conference Name: IEEE Transactions on Systems, Man, and Cybernetics -
  Part A: Systems and Humans}
\end{barticle}
\endbibitem

\bibitem[\protect\citeauthoryear{Doering
  et~al.}{2019}]{doering_metaheuristics_2019}
\begin{barticle}
\bauthor{\bsnm{Doering}, \binits{J.}},
\bauthor{\bsnm{Kizys}, \binits{R.}},
\bauthor{\bsnm{Juan}, \binits{A.A.}},
\bauthor{\bsnm{Fitó}, \binits{n.}},
\bauthor{\bsnm{Polat}, \binits{O.}}:
\batitle{Metaheuristics for rich portfolio optimisation and risk management:
  {Current} state and future trends}.
\bjtitle{Operations Research Perspectives}
\bvolume{6},
\bfpage{100121}
(\byear{2019})
\doiurl{10.1016/j.orp.2019.100121} .
Accessed 2024-07-20
\end{barticle}
\endbibitem

\bibitem[\protect\citeauthoryear{Eiben et~al.}{1991}]{eiben_global_1991}
\begin{bchapter}
\bauthor{\bsnm{Eiben}, \binits{A.E.}},
\bauthor{\bsnm{Aarts}, \binits{E.H.L.}},
\bauthor{\bsnm{Van~Hee}, \binits{K.M.}}:
\bctitle{Global convergence of genetic algorithms: {A} markov chain analysis}.
In: \beditor{\bsnm{Schwefel}, \binits{H.-P.}},
\beditor{\bsnm{Männer}, \binits{R.}} (eds.)
\bbtitle{Parallel {Problem} {Solving} from {Nature}},
pp. \bfpage{3}--\blpage{12}.
\bpublisher{Springer},
\blocation{Berlin, Heidelberg}
(\byear{1991}).
\doiurl{10.1007/BFb0029725}
\end{bchapter}
\endbibitem

\bibitem[\protect\citeauthoryear{Eberbach}{2005}]{eberbach_toward_2005}
\begin{barticle}
\bauthor{\bsnm{Eberbach}, \binits{E.}}:
\batitle{Toward a theory of evolutionary computation}.
\bjtitle{Biosystems}
\bvolume{82}(\bissue{1}),
\bfpage{1}--\blpage{19}
(\byear{2005})
\doiurl{10.1016/j.biosystems.2005.05.006} .
Accessed 2024-07-29
\end{barticle}
\endbibitem

\bibitem[\protect\citeauthoryear{Ecoretti et~al.}{2022}]{EcCS22}
\begin{bchapter}
\bauthor{\bsnm{Ecoretti}, \binits{A.}},
\bauthor{\bsnm{Ceschia}, \binits{S.}},
\bauthor{\bsnm{Schaerf}, \binits{A.}}:
\bctitle{Local search for integrated predictive maintenance and scheduling in
  flow-shop}.
In: \bbtitle{14th Metaheuristics International Conference}
(\byear{2022})
\end{bchapter}
\endbibitem

\bibitem[\protect\citeauthoryear{Fernandez
  et~al.}{2018}]{fernandez_metaheuristics_2018}
\begin{barticle}
\bauthor{\bsnm{Fernandez}, \binits{S.A.}},
\bauthor{\bsnm{Juan}, \binits{A.A.}},
\bauthor{\bsnm{Armas~Adrián}, \binits{J.}},
\bauthor{\bsnm{Silva}, \binits{D.G.e.}},
\bauthor{\bsnm{Terrén}, \binits{D.R.}}:
\batitle{Metaheuristics in {Telecommunication} {Systems}: {Network} {Design},
  {Routing}, and {Allocation} {Problems}}.
\bjtitle{IEEE Systems Journal}
\bvolume{12}(\bissue{4}),
\bfpage{3948}--\blpage{3957}
(\byear{2018})
\doiurl{10.1109/JSYST.2017.2788053} .
\bcomment{Conference Name: IEEE Systems Journal}.
Accessed 2024-07-20
\end{barticle}
\endbibitem

\bibitem[\protect\citeauthoryear{Glover and Laguna}{1998}]{glover_tabu_1998}
\begin{bbook}
\bauthor{\bsnm{Glover}, \binits{F.}},
\bauthor{\bsnm{Laguna}, \binits{M.}}:
\bbtitle{Tabu Search}.
\bpublisher{Springer}, \blocation{???}
(\byear{1998})
\end{bbook}
\endbibitem

\bibitem[\protect\citeauthoryear{Holland}{1975}]{holland_adaptation_1975}
\begin{botherref}
\oauthor{\bsnm{Holland}, \binits{J.H.}}:
Adaptation in {Natural} and {Artificial} {Systems}
(1975).
\url{https://mitpress.mit.edu/9780262581110/adaptation-in-natural-and-artificial-systems/}
Accessed 2024-07-29
\end{botherref}
\endbibitem

\bibitem[\protect\citeauthoryear{Holland}{1992}]{holland_genetic_1992}
\begin{barticle}
\bauthor{\bsnm{Holland}, \binits{J.H.}}:
\batitle{Genetic {Algorithms}}.
\bjtitle{Scientific American}
\bvolume{267}(\bissue{1}),
\bfpage{66}--\blpage{73}
(\byear{1992}).
\bcomment{Publisher: Scientific American, a division of Nature America, Inc.}
\end{barticle}
\endbibitem

\bibitem[\protect\citeauthoryear{Kallenberg}{2022}]{kallenberg_lecture_2022}
\begin{bbook}
\bauthor{\bsnm{Kallenberg}, \binits{L.C.M.}}:
\bbtitle{Lecture {Notes} on {Markov} {Decision} {Processes}}.
\bpublisher{University of Leiden}, \blocation{???}
(\byear{2022})
\end{bbook}
\endbibitem

\bibitem[\protect\citeauthoryear{Kennedy and
  Eberhart}{1995}]{kennedy_particle_1995}
\begin{bchapter}
\bauthor{\bsnm{Kennedy}, \binits{J.}},
\bauthor{\bsnm{Eberhart}, \binits{R.}}:
\bctitle{Particle swarm optimization}.
In: \bbtitle{Proceedings of {ICNN}'95-international Conference on Neural
  Networks},
vol. \bseriesno{4},
pp. \bfpage{1942}--\blpage{1948}.
\bpublisher{ieee}, \blocation{???}
(\byear{1995})
\end{bchapter}
\endbibitem

\bibitem[\protect\citeauthoryear{Kirkpatrick
  et~al.}{1983}]{kirkpatrick_optimization_1983}
\begin{barticle}
\bauthor{\bsnm{Kirkpatrick}, \binits{S.}},
\bauthor{\bsnm{Gelatt~Jr}, \binits{C.D.}},
\bauthor{\bsnm{Vecchi}, \binits{M.P.}}:
\batitle{Optimization by simulated annealing}.
\bjtitle{science}
\bvolume{220}(\bissue{4598}),
\bfpage{671}--\blpage{680}
(\byear{1983}).
\bcomment{Publisher: American association for the advancement of science}
\end{barticle}
\endbibitem

\bibitem[\protect\citeauthoryear{Kaur and Saini}{2021}]{kaur_review_2021}
\begin{bchapter}
\bauthor{\bsnm{Kaur}, \binits{M.}},
\bauthor{\bsnm{Saini}, \binits{S.}}:
\bctitle{A {Review} of {Metaheuristic} {Techniques} for {Solving} {University}
  {Course} {Timetabling} {Problem}}.
In: \beditor{\bsnm{Goar}, \binits{V.}},
\beditor{\bsnm{Kuri}, \binits{M.}},
\beditor{\bsnm{Kumar}, \binits{R.}},
\beditor{\bsnm{Senjyu}, \binits{T.}} (eds.)
\bbtitle{Advances in {Information} {Communication} {Technology} and
  {Computing}}.
\bsertitle{Lecture {Notes} in {Networks} and {Systems}},
pp. \bfpage{19}--\blpage{25}.
\bpublisher{Springer},
\blocation{Singapore}
(\byear{2021}).
\doiurl{10.1007/978-981-15-5421-6_3}
\end{bchapter}
\endbibitem

\bibitem[\protect\citeauthoryear{Larrañaga and
  Lozano}{2001}]{larranaga_estimation_2001}
\begin{bbook}
\bauthor{\bsnm{Larrañaga}, \binits{P.}},
\bauthor{\bsnm{Lozano}, \binits{J.A.}}:
\bbtitle{Estimation of Distribution Algorithms: {A} New Tool for Evolutionary
  Computation}
vol. \bseriesno{2}.
\bpublisher{Springer}, \blocation{???}
(\byear{2001})
\end{bbook}
\endbibitem

\bibitem[\protect\citeauthoryear{Lin et~al.}{2009}]{lin_applying_2009}
\begin{barticle}
\bauthor{\bsnm{Lin}, \binits{S.-W.}},
\bauthor{\bsnm{Lee}, \binits{Z.-J.}},
\bauthor{\bsnm{Ying}, \binits{K.-C.}},
\bauthor{\bsnm{Lee}, \binits{C.-Y.}}:
\batitle{Applying hybrid meta-heuristics for capacitated vehicle routing
  problem}.
\bjtitle{Expert Systems with Applications}
\bvolume{36}(\bissue{2}),
\bfpage{1505}--\blpage{1512}
(\byear{2009}).
\bcomment{Publisher: Elsevier}
\end{barticle}
\endbibitem

\bibitem[\protect\citeauthoryear{Liu et~al.}{2011}]{liu_unified_2011}
\begin{barticle}
\bauthor{\bsnm{Liu}, \binits{B.}},
\bauthor{\bsnm{Wang}, \binits{L.}},
\bauthor{\bsnm{Liu}, \binits{Y.}},
\bauthor{\bsnm{Wang}, \binits{S.}}:
\batitle{A unified framework for population-based metaheuristics}.
\bjtitle{Annals of Operations Research}
\bvolume{186}(\bissue{1}),
\bfpage{231}--\blpage{262}
(\byear{2011})
\doiurl{10.1007/s10479-011-0894-3} .
Accessed 2024-06-26
\end{barticle}
\endbibitem

\bibitem[\protect\citeauthoryear{Mladenović and
  Hansen}{1997}]{mladenovic_variable_1997}
\begin{barticle}
\bauthor{\bsnm{Mladenović}, \binits{N.}},
\bauthor{\bsnm{Hansen}, \binits{P.}}:
\batitle{Variable neighborhood search}.
\bjtitle{Computers \& operations research}
\bvolume{24}(\bissue{11}),
\bfpage{1097}--\blpage{1100}
(\byear{1997}).
\bcomment{Publisher: Elsevier}
\end{barticle}
\endbibitem

\bibitem[\protect\citeauthoryear{Pillay}{2014}]{pillay_survey_2014}
\begin{barticle}
\bauthor{\bsnm{Pillay}, \binits{N.}}:
\batitle{A survey of school timetabling research}.
\bjtitle{Annals of Operations Research}
\bvolume{218}(\bissue{1}),
\bfpage{261}--\blpage{293}
(\byear{2014})
\doiurl{10.1007/s10479-013-1321-8} .
Accessed 2022-03-03
\end{barticle}
\endbibitem

\bibitem[\protect\citeauthoryear{Pisinger and
  Ropke}{2019}]{pisinger_large_2019}
\begin{bchapter}
\bauthor{\bsnm{Pisinger}, \binits{D.}},
\bauthor{\bsnm{Ropke}, \binits{S.}}:
\bctitle{Large {Neighborhood} {Search}}.
In: \beditor{\bsnm{Gendreau}, \binits{M.}},
\beditor{\bsnm{Potvin}, \binits{J.-Y.}} (eds.)
\bbtitle{Handbook of {Metaheuristics}},
pp. \bfpage{99}--\blpage{127}.
\bpublisher{Springer},
\blocation{Cham}
(\byear{2019}).
\doiurl{10.1007/978-3-319-91086-4_4} .
\burl{https://doi.org/10.1007/978-3-319-91086-4_4}
Accessed 2024-07-29
\end{bchapter}
\endbibitem

\bibitem[\protect\citeauthoryear{Silberholz and
  Golden}{2010}]{silberholz_comparison_2010}
\begin{bchapter}
\bauthor{\bsnm{Silberholz}, \binits{J.}},
\bauthor{\bsnm{Golden}, \binits{B.}}:
\bctitle{Comparison of {Metaheuristics}}.
In: \beditor{\bsnm{Gendreau}, \binits{M.}},
\beditor{\bsnm{Potvin}, \binits{J.-Y.}} (eds.)
\bbtitle{Handbook of {Metaheuristics}},
pp. \bfpage{625}--\blpage{640}.
\bpublisher{Springer},
\blocation{Boston, MA}
(\byear{2010}).
\doiurl{10.1007/978-1-4419-1665-5_21} .
\burl{https://doi.org/10.1007/978-1-4419-1665-5_21}
Accessed 2024-07-20
\end{bchapter}
\endbibitem

\bibitem[\protect\citeauthoryear{Sörensen and
  Glover}{2013}]{sorensen_metaheuristics_2013}
\begin{barticle}
\bauthor{\bsnm{Sörensen}, \binits{K.}},
\bauthor{\bsnm{Glover}, \binits{F.}}:
\batitle{Metaheuristics}.
\bjtitle{Encyclopedia of operations research and management science}
\bvolume{62},
\bfpage{960}--\blpage{970}
(\byear{2013}).
\bcomment{Publisher: Springer Boston, MA, USA}
\end{barticle}
\endbibitem

\bibitem[\protect\citeauthoryear{Shaw}{1998}]{shaw_using_1998}
\begin{bchapter}
\bauthor{\bsnm{Shaw}, \binits{P.}}:
\bctitle{Using constraint programming and local search methods to solve vehicle
  routing problems}.
In: \bbtitle{International Conference on Principles and Practice of Constraint
  Programming},
pp. \bfpage{417}--\blpage{431}.
\bpublisher{Springer}, \blocation{???}
(\byear{1998})
\end{bchapter}
\endbibitem

\bibitem[\protect\citeauthoryear{Suzuki}{1995}]{suzuki1995markov}
\begin{barticle}
\bauthor{\bsnm{Suzuki}, \binits{J.}}:
\batitle{A markov chain analysis on simple genetic algorithms}.
\bjtitle{IEEE Transactions on Systems, Man, and Cybernetics}
\bvolume{25}(\bissue{4}),
\bfpage{655}--\blpage{659}
(\byear{1995})
\end{barticle}
\endbibitem

\bibitem[\protect\citeauthoryear{Suzuki}{1998}]{suzuki1998further}
\begin{barticle}
\bauthor{\bsnm{Suzuki}, \binits{J.}}:
\batitle{A further result on the markov chain model of genetic algorithms and
  its application to a simulated annealing-like strategy}.
\bjtitle{IEEE Transactions on Systems, Man, and Cybernetics, Part B
  (Cybernetics)}
\bvolume{28}(\bissue{1}),
\bfpage{95}--\blpage{102}
(\byear{1998})
\end{barticle}
\endbibitem

\bibitem[\protect\citeauthoryear{Xu and
  Zhang}{2014}]{xu_exploration-exploitation_2014}
\begin{bchapter}
\bauthor{\bsnm{Xu}, \binits{J.}},
\bauthor{\bsnm{Zhang}, \binits{J.}}:
\bctitle{Exploration-exploitation tradeoffs in metaheuristics: {Survey} and
  analysis}.
In: \bbtitle{Proceedings of the 33rd {Chinese} {Control} {Conference}},
pp. \bfpage{8633}--\blpage{8638}
(\byear{2014}).
\doiurl{10.1109/ChiCC.2014.6896450} .
\bcomment{ISSN: 1934-1768}
\end{bchapter}
\endbibitem

\end{thebibliography}

\end{document}